\documentclass[sigconf,nonacm]{acmart}

\AtBeginDocument{%
  }

\usepackage{booktabs}
\usepackage{multirow}
\usepackage{graphicx}
\graphicspath{{figures/}{./}}
\usepackage{amsmath,amsthm}
\usepackage{nicefrac}
\usepackage{float}
\usepackage[ruled,vlined,linesnumbered]{algorithm2e}
\newtheorem{proposition}{Proposition}

\renewcommand\footnotetextcopyrightpermission[1]{}

\begin{document}

\title{CRISP: Scalable Importance-Stratified Coresets for Imbalanced Tabular Learning}
\titlenote{The views expressed in this article are those of the authors alone and do not necessarily reflect the views of Coinbase or its affiliates.}

\author{Hardhik Mohanty}
\affiliation{%
  \institution{University of Southern California}
  \city{Los Angeles}
  \state{California}
  \country{USA}}
\email{hmohanty@usc.edu}

\author{Indrayana Rustandi}
\affiliation{%
  \institution{Coinbase}
  \city{San Francisco}
  \state{California}
  \country{USA}}
\email{indrayana.rustandi@coinbase.com}

\author{Mohamadreza Sheibani}
\affiliation{%
  \institution{Coinbase}
  \city{San Francisco}
  \state{California}
  \country{USA}}
\email{mohamadreza.sheibani@coinbase.com}

\renewcommand{\shortauthors}{Mohanty, Rustandi, and Sheibani}

\begin{abstract}
Large imbalanced tabular datasets make repeated gradient-boosted tree training expensive. Existing coreset methods often lose accuracy when most majority examples are removed. We present \textbf{CRISP} (\textbf{C}oreset \textbf{R}eduction via \textbf{I}mportance-\textbf{S}tratified \textbf{P}runing), a linear-time method that allocates a negative-class budget across quantile strata of a proxy-model score. Sample weights account for unequal inclusion probabilities. At $95\%$ negative-class reduction on a production fraud dataset, CRISP trains on approximately $1.70$M of $25$M rows and retains $99.7\%$ of full-data Average Precision. This is a $93.2\%$ reduction in total training rows. On public CriteoPrivateAds, CRISP has the highest mean Average Precision at each tested rate from $90\%$ to $99.4\%$ majority reduction. Sparkov results are mixed at lower rates, but CRISP has the highest mean at $99.2\%$ and $99.4\%$. Ablations identify budget allocation and inverse-propensity weighting as the main sources of the production-dataset gain.
\end{abstract}

\begin{CCSXML}
<ccs2012>
 <concept>
  <concept_id>10010147.10010257.10010293.10010294</concept_id>
  <concept_desc>Computing methodologies~Supervised learning by classification</concept_desc>
  <concept_significance>500</concept_significance>
 </concept>
 <concept>
  <concept_id>10002950.10003648.10003688.10003693</concept_id>
  <concept_desc>Mathematics of computing~Combinatorial optimization</concept_desc>
  <concept_significance>300</concept_significance>
 </concept>
 <concept>
  <concept_id>10002951.10003227.10003351</concept_id>
  <concept_desc>Information systems~Data mining</concept_desc>
  <concept_significance>300</concept_significance>
 </concept>
</ccs2012>
\end{CCSXML}

\ccsdesc[500]{Computing methodologies~Supervised learning by classification}
\ccsdesc[300]{Mathematics of computing~Combinatorial optimization}
\ccsdesc[300]{Information systems~Data mining}

\keywords{coreset selection, class imbalance, fraud detection, gradient boosting, inverse propensity weighting}

\maketitle

\section{Introduction}

Large tabular datasets create a recurring cost rather than a one-time training expense. A production model is retrained as labels mature, transaction patterns change, and new features become available. Each candidate model may also be trained many times during validation, calibration, and hyperparameter search. U.S.\ ACH volume rose from $11.6$ billion transactions in 2014 to $21.6$ billion in 2025~\cite{fedach}, illustrating the growth faced by financial systems. Similar scale appears in advertising and commerce. Gradient-boosted decision trees (GBDTs) such as XGBoost~\cite{xgboost} and LightGBM~\cite{lightgbm} remain strong choices for these data, but repeated training on tens of millions of rows can dominate the development cycle.

The cost is not distributed evenly across examples. In fraud and conversion prediction, most rows belong to the negative class and are easy to classify. Many occupy dense interior regions that add little new information once the model has seen enough similar examples. The rare positive class and the smaller set of boundary negatives carry much more of the signal needed to distinguish classes. Uniformly discarding rows ignores this difference. Uniform sampling is attractive because it is simple and preserves the population distribution in expectation, yet it removes informative and redundant negatives at the same rate.

A coreset attempts to retain the information needed for training in a smaller weighted subset~\cite{coresetsurvey}. We focus on one-shot selection, where the subset is created before target-model training and reused across later runs. Reuse matters in practice. A selector that costs as much as the full hyperparameter sweep offers little benefit, while a selector that can be amortized across models may save substantial computation. This requirement favors methods with one lightweight proxy-model fit and a small number of passes over the data.

Class imbalance places a strong constraint on subset selection for fraud and conversion data. Removing positives is risky when they represent less than $2\%$ of the data~\cite{eurocard,deepfraud}. Our pipeline therefore retains all positive examples and applies the reduction budget to negatives only. This distinction affects how reduction is reported. A $95\%$ negative-class reduction is not a $95\%$ reduction in all rows because the positive class remains intact. We report both the selector's negative-class rate and the corresponding target-training size.

Existing selection strategies make different tradeoffs between coverage, model awareness, and cost. Geometry-based methods preserve feature-space coverage but can become expensive or unreliable in high dimensions. Gradient-matching methods use a more direct learning objective, although pairwise similarities and iterative updates limit their use at production scale~\cite{craig,gradmatch}. Training-dynamics methods identify easy, ambiguous, and hard regions from predictions collected across checkpoints~\cite{coretab,datamaps}. They are informative but require repeated inference. Coverage-Centric Coreset Selection (CCS) avoids concentrating the subset on only the hardest examples by sampling across score strata~\cite{ccs}. Its uniform allocation, however, gives the same budget to strata whose estimated learning value may differ sharply.

Direct importance sampling addresses allocation but can lose coverage. At high reduction rates, most of the selected rows may come from a narrow upper tail of the score distribution. Zheng et al.\ report that this behavior can perform worse than random sampling~\cite{ccs}. Unequal sampling also changes the effective training distribution unless the target learner receives appropriate weights. These two issues motivate the design of CRISP: preserve broad score coverage through quantile strata, then spend more of the budget on strata with larger proxy-model scores and correct unequal inclusion with inverse-propensity weights.

CRISP trains a lightweight proxy GBDT whose sole role is to assign importance scores to negative rows before target-model training. The proxy model is separate from the downstream model and is used only during coreset construction. Equal-count score quantiles prevent a skewed score distribution from collapsing into one dominant bin. The mean score of each stratum determines its share of the negative budget. Sampling then occurs within each stratum, and selected negatives receive clipped inverse inclusion weights. Only the fixed-size proxy training sample is collected centrally. Scoring, quantiles, aggregation, and sampling remain distributed, giving linear cost in the number of training rows.

We evaluate this design on three datasets with different scales and domains. The proprietary production dataset contains about $25$ million rows, of which $1.9\%$ are positive examples. This proportion describes the constructed experimental training dataset and is not an overall or population-level payment-reversal rate. CriteoPrivateAds provides a public advertising benchmark with roughly $85$ million training rows~\cite{criteoprivatead}. Sparkov~\cite{sparkovdataset} supplies a smaller synthetic fraud benchmark where additional baselines are feasible. On the production dataset, CRISP retains $99.7\%$ of full-data monthly AP after removing $95\%$ of negatives, while using approximately $6.8\%$ of the original rows. On CriteoPrivateAds, CRISP has the highest mean AP at every tested rate. Sparkov is less uniform and exposes the sensitivity of five-run averages to outliers.

The contributions are:
\begin{enumerate}
    \item A linear-time, importance-allocated stratified sampling framework for highly imbalanced tabular data, with explicit accounting for the positive-class retention floor.
    \item An estimator analysis that separates the unbiased, unclipped inverse-propensity estimator from the biased clipped estimator used for stable GBDT training.
    \item Production-scale evidence at up to $95\%$ negative-class reduction, public cross-domain evidence from CriteoPrivateAds, and component ablations covering stratification, score construction, clipping, and proxy-model capacity.
    \item Row-count tradeoff plots and public benchmark configurations that distinguish target-training size from end-to-end selection cost.
\end{enumerate}

CRISP is designed for high reduction rates, where a small negative budget makes allocation and coverage especially important. The experiments emphasize this regime while retaining lower-rate results where available. Section~\ref{sec:related} reviews coreset selection, training-dynamics methods, and unequal-probability sampling. Section~\ref{sec:method} presents CRISP and its distributed implementation, followed by the estimator properties in Section~\ref{sec:analysis}. Section~\ref{sec:experiments} describes the datasets, baselines, ablations, and cross-dataset results. Sections~\ref{sec:reproducibility} and~\ref{sec:limitations} discuss reproducibility, limitations, and ethical considerations, and Section~\ref{sec:conclusion} concludes the paper.

\section{Related Work}
\label{sec:related}

\textbf{Imbalanced instance selection.} Early instance-selection methods remove majority examples with local geometric rules. Condensed Nearest Neighbors retains a small set that preserves nearest-neighbor decisions, while Tomek Links removes overlapping pairs near a class boundary~\cite{cnn,tomek}. NearMiss instead keeps majority points close to minority examples~\cite{nearmiss}. These methods are intuitive on small metric datasets, but distance computations become expensive and less informative as dimension and sample count grow. LSH-based selection reduces this cost by replacing exact neighborhoods with hash-based approximations~\cite{lsh_imbalance}. Its selection criterion is still independent of the model eventually trained on the subset.

\textbf{Proxy and geometry-based coresets.} Active-learning coresets use feature-space coverage to select points that are far from the current labeled set~\cite{coresetactive}. Selection via Proxy reduces the cost of repeated ranking by using a smaller proxy model to score examples~\cite{selectionproxy}. FCTR combines gradient approximations with clustering and product quantization, reporting a three- to ten-fold selection speedup over gradient-based competitors~\cite{fctr}. More recent approaches reconstruct a learned decision boundary~\cite{mindboundary} or optimize the smallest subset that satisfies a model-performance constraint~\cite{refinedcoreset}. These methods strengthen geometric or optimization objectives, but most were evaluated outside large-scale tabular settings and at moderate pruning rates.

\textbf{Training dynamics and sample importance.} Example forgetting counts how often a model changes an example from correct to incorrect during training~\cite{exampleforgetting}. Dataset Cartography summarizes confidence and variability across checkpoints to identify easy, ambiguous, and hard regions~\cite{datamaps}. GraNd and EL2N estimate importance from early loss gradients or prediction errors~\cite{datadiet}. TracIn measures influence through gradients saved at several training checkpoints~\cite{tracin}. SAMIS uses the discrepancy between sharpness-aware and standard optimization as an estimate of memorization~\cite{samis}. These scores can identify useful or atypical examples, although collecting trajectories or training multiple models can cost as much as the downstream training that pruning is meant to avoid.

\textbf{Gradient and validation matching.} CRAIG selects a subset whose aggregate gradient approximates the full-data gradient~\cite{craig}. GradMatch directly minimizes gradient mismatch, and GLISTER optimizes a validation objective~\cite{gradmatch,glister}. These methods give a clearer optimization target than scalar difficulty scores. Their pairwise similarities, repeated gradient evaluations, or iterative subset updates limit their use on tens of millions of rows. A comparison on a small downsample can still be informative, but it does not answer whether the selector can run on the production-scale dataset itself.

\textbf{High reduction and tabular models.} CCS addresses a failure that appears when a score-based selector keeps only the hardest examples. It divides examples into difficulty strata and allocates budget uniformly so that easy and moderate examples remain represented~\cite{ccs}. Uniform allocation protects coverage, but it does not distinguish between strata when the final budget is very small. CoreTab adapts training-dynamics ideas to GBDTs by building datamaps from predictions at several boosting checkpoints~\cite{coretab}. This is one of the few coreset methods designed for modern tabular models, though its repeated prediction passes are costly on the largest datasets.

\textbf{Connection to survey sampling.} Unequal-probability sampling has long used inverse inclusion weights to estimate a finite-population total~\cite{horvitzthompson}. CRISP follows this principle after allocating a negative-class budget across proxy-score strata. It differs from CCS in the allocation rule and from direct importance sampling in its use of quantile coverage. It also differs from gradient-matching coresets because it never forms pairwise gradient similarities. The practical gap addressed here is narrower than general coreset construction: a reusable subset for imbalanced tabular GBDTs, selected with linear passes at negative-class reduction rates above $90\%$. Weight clipping means the deployed estimator is not exactly unbiased, so Section~\ref{sec:analysis} states the resulting bias rather than claiming a general variance guarantee.

\section{Methodology: CRISP}
\label{sec:method}

\subsection{Problem Formulation}
Let $\mathcal{V}=\{(x_i,y_i)\}_{i=1}^{n}$ with $y_i\in\{0,1\}$. The positive and negative index sets are $\mathcal{C}_+$ and $\mathcal{C}_-$, with sizes $n_+$ and $n_-$. CRISP retains every positive row and reduces only the negative class. For a negative-class reduction rate $r_-$, the target negative budget is
\begin{equation}
    k_-=\left\lfloor(1-r_-)n_-\right\rfloor.
    \label{eq:negative_budget}
\end{equation}
The expected training size is therefore $n_+ + k_-$ rather than $(1-r_-)n$. This distinction matters at high reduction rates because positives become a large share of the retained rows.

\subsection{Budget Interpretation}
Negative-class reduction and total-row reduction are related but not identical. If the negative budget is met exactly, the fraction of all rows removed is
\begin{equation}
    r_{\mathrm{all}}=r_-\frac{n_-}{n}.
    \label{eq:total_reduction}
\end{equation}
The retained positive fraction places a floor on target-training size. On the production dataset, $95\%$ negative reduction removes about $93.2\%$ of all rows because positives account for roughly $1.9\%$ of the experimental training table. The difference is larger on Sparkov and Criteo at rates above $99\%$, where retained positives can outnumber retained negatives.

\begin{table}[t]
\centering
\caption{Training rows implied by the retain-all-positives rule.}
\label{tab:derived_rows}
\small
\setlength{\tabcolsep}{4pt}
\begin{tabular}{@{}lrrrr@{}}
\toprule
\textbf{Dataset} & \textbf{Full} & \textbf{90\%} & \textbf{95\%} & \textbf{99.4\%} \\
\midrule
Production & $25.0$M & $2.923$M & $1.697$M & $0.617$M \\
Sparkov & $1.297$M & $136{,}423$ & $71{,}964$ & $15{,}241$ \\
CriteoPrivateAds & $85.0$M & $9.089$M & $4.872$M & $1.161$M \\
\bottomrule
\end{tabular}
\end{table}

We use $r_-$ throughout the paper because it is the parameter passed to every selector. Figures that discuss training cost use the derived total rows instead. This keeps method comparisons at the same negative budget while making the actual target-model input size visible.

\subsection{Proxy Model and Importance Scores}
A lightweight proxy GBDT is trained on at most $N_{\mathrm{sub}}$ negatives and all positives. This model is used only to rank examples for selection and is not reused as the downstream classifier. Class weighting is used during its fit. Its output $p_i$ is a ranking score for fraud risk, not a calibrated estimate of the population fraud rate. For a negative row, CRISP computes
\begin{equation}
    s_i=\max\{\epsilon,\alpha p_i+\beta p_i(1-p_i)\},
    \qquad i\in\mathcal{C}_-,
    \label{eq:importance}
\end{equation}
where $\epsilon>0$ prevents zero importance scores. A strictly positive inclusion probability also requires the rounded stratum allocation $k_q$ to be positive. The experiments use $\alpha=\beta=1$. In that setting $s(p)=2p-p^2$ is monotone for $p\in[0,1]$. The proxy-model ranking is therefore driven by $p_i$, while the nonlinear transformation changes the relative budget assigned to score strata.

\subsection{Budget Allocation and Sampling}
The negative rows are partitioned into $Q$ equal-count quantile strata $B_1,\ldots,B_Q$. Let $\bar{s}_q$ be the mean score in stratum $q$. CRISP assigns
\begin{equation}
    k_q \propto \bar{s}_q,
    \qquad \sum_{q=1}^{Q} k_q=k_-,
    \label{eq:allocation}
\end{equation}
with integer rounding and redistribution when $k_q>|B_q|$. Within a stratum, define $u_i=s_i^\gamma$. The inclusion probability used by the evaluated Bernoulli sampler is
\begin{equation}
    \pi_i=\min\!\left(1,\frac{k_q u_i}{\sum_{j\in B_q}u_j}\right).
    \label{eq:inclusion_probability}
\end{equation}
The production implementation uses $\gamma=1$, which favors larger scores inside each stratum. Sparkov and CriteoPrivateAds use $\gamma=0$, which samples uniformly inside each stratum. All variants share the same quantile construction and importance-based allocation. Since sampling is Bernoulli, $k_q$ is an expected budget before probability saturation and the realized coreset size varies by seed.

Selected negatives receive a clipped inverse inclusion weight
\begin{equation}
    \widetilde w_i=\min\{1/\pi_i,w_{\max}\}.
    \label{eq:clipped_weight}
\end{equation}
All positives receive weight one. The target GBDT consumes these values as sample weights.

\begin{algorithm}[t]
\DontPrintSemicolon
\small
\caption{\textbf{CRISP}: Coreset Selection}\label{alg:crisp}
\KwIn{Dataset $\mathcal{V}$, negative reduction $r_-$, strata $Q$, score weights $\alpha,\beta$, within-stratum exponent $\gamma$}
\KwOut{Coreset $\mathcal{S}$ with weights $\{w_i\}$}
$\mathcal{C}_+,\mathcal{C}_-\leftarrow\textsc{Partition}(\mathcal{V})$;\;
$k_-\leftarrow\lfloor(1-r_-)\cdot|\mathcal{C}_-|\rfloor$\;
$f_{\mathrm{proxy}}\leftarrow\textsc{TrainProxy}(\text{subsample of }\mathcal{V})$\;
\ForEach{$i\in\mathcal{C}_-$}{
  $p_i\leftarrow f_{\mathrm{proxy}}(x_i)$;\;
  $s_i\leftarrow\max\{\epsilon,\alpha p_i+\beta p_i(1-p_i)\}$\;
}
$\{B_q\}\leftarrow\textsc{QuantilePartition}(\{s_i\},Q)$\;
\ForEach{stratum $B_q$}{
  $k_q\leftarrow k_-\cdot\bar{s}_q/\sum_{q'}\bar{s}_{q'}$\;
  \ForEach{$i\in B_q$}{
    $\pi_i\leftarrow\min(1,k_q s_i^\gamma/\sum_{j\in B_q}s_j^\gamma)$\;
    Sample $i$ w.p.\ $\pi_i$; set $w_i\leftarrow\min(1/\pi_i,w_{\max})$\;
  }
}
\Return{$\mathcal{C}_+\cup\textsc{SelectedNegatives},\{w_i\}$}\;
\end{algorithm}

\subsection{Distributed Implementation}
Figure~\ref{fig:architecture} summarizes the selection pipeline. The selector keeps the full negative table in Spark. A fixed-size negative sample and all positives are transferred to the driver for proxy-model fitting. The fitted model is then broadcast to the workers, which score negative rows in vectorized batches. Approximate quantiles form the stratum boundaries. A grouped aggregation returns each stratum's row count, mean score, and score sum. Since this summary contains only $Q$ rows, the allocation table can be broadcast back to the workers before sampling.

This design avoids collecting the full score vector on one machine. It also makes the distinction between target and realized size explicit. The allocation routine produces the target $k_q$ for each stratum, while Bernoulli sampling determines the count observed in a given run. The implementation records both the target and realized negative counts, along with the range of sample weights. These diagnostics are especially important when the positive class forms a substantial fraction of the final training set.

\textbf{Complexity.} Proxy-model training uses a sample capped independently of $n$. Scoring, approximate quantiles, aggregation, and sampling require a small constant number of distributed passes. The overall selection cost is therefore $\mathcal{O}(n)$ in the number of training rows, with memory distributed across the Spark workers.

\begin{figure*}[t]
\centering
\includegraphics[width=0.8\textwidth]{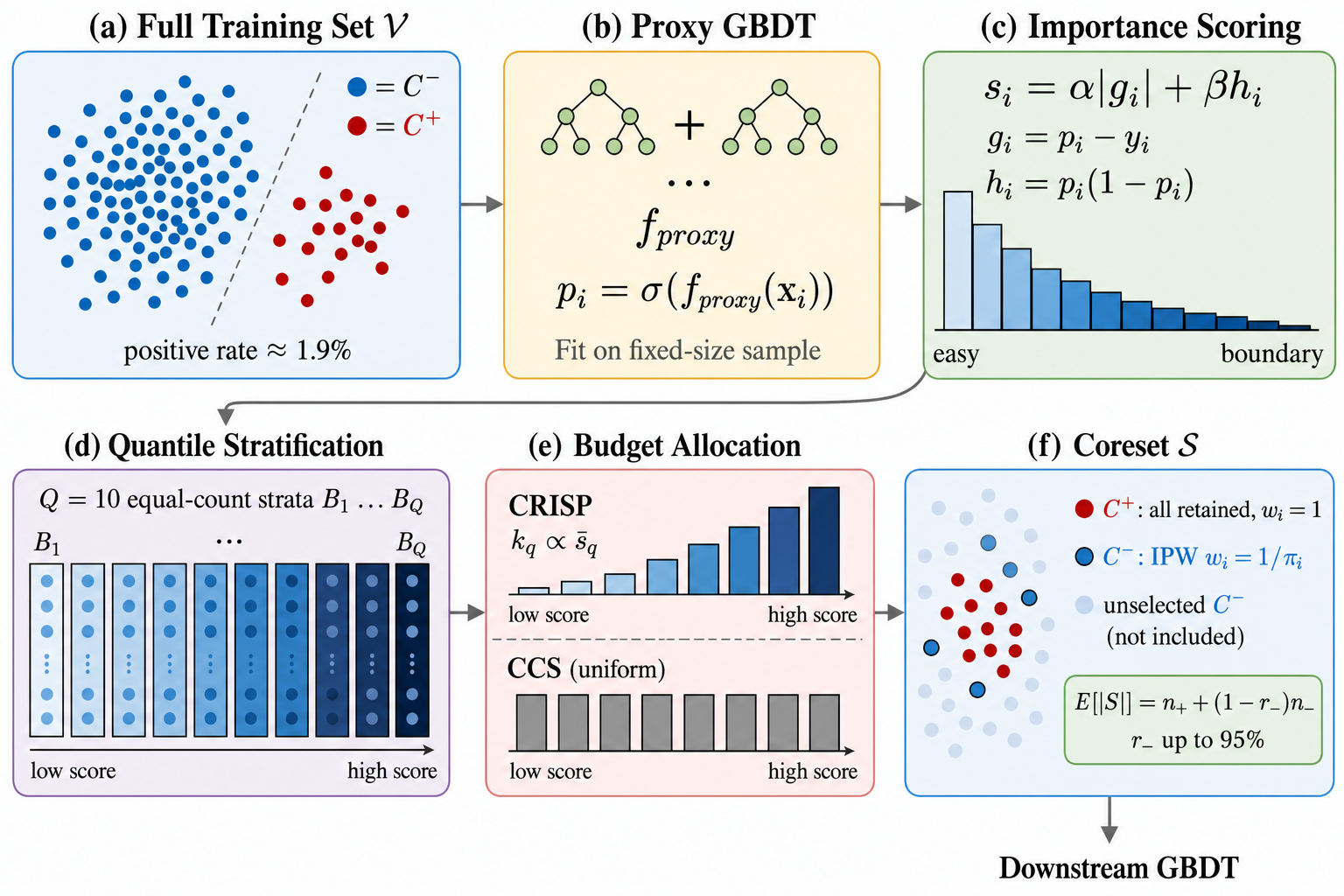}
\Description{CRISP keeps all positive rows, scores negative rows with a proxy model, forms score quantiles, allocates the negative budget by mean score, samples within each stratum, and applies clipped inverse-propensity weights before target-model training.}
\caption{CRISP selection pipeline. The input rate $r_-$ reduces the negative class only. Quantile strata preserve score coverage, while their mean scores determine budget. The realized negative count is random under Bernoulli sampling.}
\label{fig:architecture}
\end{figure*}

\section{Design Rationale and Estimator Properties}
\label{sec:analysis}

\subsection{Coverage Under a Small Budget}
Proxy-model scores on imbalanced data are typically concentrated near zero with a much smaller upper tail. Equal-width bins can therefore place most negatives into one stratum and leave several bins nearly empty. Equal-count quantiles avoid that collapse. Every stratum starts with a comparable number of candidates, and the allocation step decides how many to retain from each part of the ranking.

This construction also separates two decisions that direct importance sampling combines. The strata determine which score regions remain represented. The allocation determines how much resolution each region receives. A high-score stratum may receive most of the budget, but low and moderate strata retain nonzero inclusion probabilities. The $Q=1$ ablation removes this separation and recovers the direct importance-sampling behavior discussed by Zheng et al.~\cite{ccs}.

\subsection{Weighting and Estimator Scope}
The weighting argument follows unequal-probability survey sampling~\cite{horvitzthompson}. Fix the dataset $\mathcal V$ and model parameter $\theta$. For $\ell_i(\theta)=\ell(f_\theta(x_i),y_i)$, define the full-data empirical risk
\begin{equation}
    L_{\mathcal V}(\theta)=\frac{1}{n}\sum_{i=1}^{n}\ell_i(\theta).
    \label{eq:empirical_risk}
\end{equation}
Let $\mathcal D$ denote the realized sampling design, including the fitted proxy model, scores, strata, rounded allocations, and resulting inclusion probabilities.

\begin{proposition}[Conditional unbiasedness of unclipped IPW]
\label{prop:unbiased}
Let $I_i$ indicate whether negative row $i$ is selected. Assume $\pi_i\in(0,1]$ almost surely and $\mathbb{E}[I_i\mid\mathcal D]=\pi_i$ for each $i\in\mathcal C_-$. If selected negatives use $w_i=1/\pi_i$, then
\begin{equation}
    \widehat L(\theta)=\frac{1}{n}
    \left(\sum_{i\in\mathcal{C}_+}\ell_i(\theta)
    +\sum_{i\in\mathcal{C}_-}I_i\frac{\ell_i(\theta)}{\pi_i}\right)
\end{equation}
satisfies
\begin{equation}
    \mathbb{E}[\widehat L(\theta)\mid\mathcal D]
    =L_{\mathcal V}(\theta),
\end{equation}
and therefore $\mathbb{E}[\widehat L(\theta)]=L_{\mathcal V}(\theta)$.
\end{proposition}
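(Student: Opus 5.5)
The plan is a direct Horvitz--Thompson computation carried out conditionally on the realized design $\mathcal D$, followed by the tower property. Throughout we fix $\mathcal V$ and $\theta$. Then each $\ell_i(\theta)$ is a deterministic number, since it depends only on the fixed data and parameter and not on the sampling randomness. Conditionally on $\mathcal D$, each $\pi_i$ is fixed, with $\pi_i\in(0,1]$ almost surely. So $\ell_i(\theta)/\pi_i$ is a well-defined, finite, $\mathcal D$-measurable constant. This is the only place the positivity assumption enters.

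Next we split $\widehat L(\theta)$ into its positive-class and negative-class parts.

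\emph{Positive part.} The positive part $\frac1n\sum_{i\in\mathcal C_+}\ell_i(\theta)$ involves no selection indicators, because every positive row is retained with weight one. It therefore passes through the conditional expectation unchanged.

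\emph{Negative part.} For the negative part we use linearity of conditional expectation over the finite sum. We then pull the $\mathcal D$-measurable factor $\ell_i(\theta)/\pi_i$ outside the expectation. This gives
\[
\mathbb E\Big[\sum_{i\in\mathcal C_-} I_i\frac{\ell_i(\theta)}{\pi_i}\,\Big|\,\mathcal D\Big]
=\sum_{i\in\mathcal C_-}\frac{\ell_i(\theta)}{\pi_i}\,\mathbb E[I_i\mid\mathcal D]
=\sum_{i\in\mathcal C_-}\ell_i(\theta),
\]
where the last step uses the hypothesis $\mathbb E[I_i\mid\mathcal D]=\pi_i$. Adding the two parts and dividing by $n$ recovers $L_{\mathcal V}(\theta)$, because $\mathcal C_+$ and $\mathcal C_-$ partition $\{1,\dots,n\}$.

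For the unconditional statement, we apply the tower property: $\mathbb E[\widehat L(\theta)]=\mathbb E\big[\mathbb E[\widehat L(\theta)\mid\mathcal D]\big]=\mathbb E[L_{\mathcal V}(\theta)]=L_{\mathcal V}(\theta)$. The last equality holds because $L_{\mathcal V}(\theta)$ is non-random once $\mathcal V$ and $\theta$ are fixed.

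\emph{Main obstacle.} The step needing the most care is integrability, which is needed to justify the tower property and the pull-out step. The loss $\ell_i(\theta)$ is finite because $\mathcal V$ is a finite dataset. Conditionally on $\mathcal D$, each summand is bounded by $|\ell_i(\theta)|/\pi_i<\infty$, and the sum has finitely many terms, so the conditional expectation is well defined. The conditional expectation $\mathbb E[\widehat L(\theta)\mid\mathcal D]$ is then a constant, so the outer expectation is trivially finite, even though $1/\pi_i$ need not be bounded across realizations of $\mathcal D$. Two further points should be noted.
\begin{itemize}
\item No independence between the $I_i$ is required, only the correct marginal inclusion probabilities. The result therefore also covers the Bernoulli sampler of Algorithm~\ref{alg:crisp}.
\item Clipping would break the identity. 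The equality $\mathbb E[I_i\mid\mathcal D]\,\widetilde w_i=\pi_i\min\{1/\pi_i,w_{\max}\}$ equals one only when $1/\pi_i\le w_{\max}$. This is why the proposition is stated for the unclipped weights $w_i=1/\pi_i$.
\end{itemize}
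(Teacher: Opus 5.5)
Your proposal is correct and matches the paper's proof essentially step for step: condition on $\mathcal D$, treat the positive sum as deterministic, pull the $\mathcal D$-measurable factor $\ell_i(\theta)/\pi_i$ out of each negative term, use $\mathbb E[I_i\mid\mathcal D]=\pi_i$ with linearity only (no independence), and finish with the law of total expectation. One small tightening: a constant conditional expectation does not by itself make $\widehat L(\theta)$ integrable, but the same computation on the nonnegative terms $I_i|\ell_i(\theta)|/\pi_i$, where the tower property needs no integrability hypothesis, gives $\mathbb E\,|\widehat L(\theta)|\le\frac1n\sum_{i=1}^n|\ell_i(\theta)|<\infty$; the paper leaves this point implicit.
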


The result uses only first-order inclusion probabilities and does not require independent selections. The deployed estimator clips each negative weight at $w_{\max}$. For a nonnegative loss, its conditional downward bias is
\begin{equation}
    L_{\mathcal V}(\theta)
    -\mathbb{E}[\widehat L_{\mathrm{clip}}(\theta)\mid\mathcal D]
    =\frac{1}{n}\sum_{i:\pi_i<1/w_{\max}}
    \left(1-\pi_i w_{\max}\right)\ell_i(\theta).
    \label{eq:clipping_bias}
\end{equation}
Clipping is therefore a stability choice rather than an unbiased correction. The ablation in Section~\ref{sec:ablation} measures its empirical effect.

Proposition~\ref{prop:unbiased} concerns empirical loss at a fixed $\theta$. It does not imply that training on the weighted sample produces the same model, nor does it guarantee AP retention after optimization. CRISP is therefore a task-aware subset selector rather than a strong coreset that approximates every parameter setting. The empirical evaluation tests whether the weaker estimator property is useful for downstream GBDT training.

\subsection{Allocation as a Practical Heuristic}
Classical stratified sampling allocates more rows to strata with larger within-stratum variation. The optimal allocation depends on quantities that are unavailable before target training. CRISP substitutes the mean proxy-model score for the unknown variation. This resembles Neyman allocation when score and loss variation are correlated, but it is not an estimate of the exact Neyman solution.

The design makes a testable prediction. Importance-aware allocation should matter little when the budget is large and should separate from uniform allocation as $k_-$ shrinks. The production dataset and Criteo follow that pattern. Sparkov is less consistent, which is evidence against treating the allocation rule as universally optimal. The uniform-allocation baseline, the $Q=1$ ablation, and the no-weight ablation test different parts of this prediction.

\section{Experiments}
\label{sec:experiments}

Table~\ref{tab:datasets} lists the three evaluation datasets. The production dataset contains financial transactions. Sparkov is synthetic fraud data, and CriteoPrivateAds is a public advertising conversion dataset.

\begin{table}[t]
\centering
\caption{Training-set statistics. Criteo positives are conversions. Production and Criteo counts are approximate; production counts describe the constructed experimental training dataset rather than a population-level rate.}
\label{tab:datasets}
\small
\begin{tabular}{@{}lrrrl@{}}
\toprule
\textbf{Dataset} & \textbf{Rows} & \textbf{Positive} & \textbf{Feat.} & \textbf{Source} \\
\midrule
Production & ${\sim}25$M & ${\sim}470$K & 125 & Proprietary \\
Sparkov & $1{,}296{,}675$ & $7{,}506$ & 14 & Public, synthetic \\
CriteoPrivateAds & ${\sim}85$M & ${\sim}655$K & 51 & Public \\
\bottomrule
\end{tabular}
\end{table}

\subsection{Datasets and Protocol}
\textbf{Production dataset.} The production task predicts payment reversal fraud. Training covers April through September 2025, followed by temporal calibration and validation windows through January 2026. The reported metrics are Average Precision on monthly and weekly temporal holds, written AP$_{\mathrm{month}}$ and AP$_{\mathrm{week}}$. Each selected dataset receives a FLAML zero-shot LightGBM configuration~\cite{flaml} and follows the same calibration procedure. Because the zero-shot suggestion is computed from the selected data, target hyperparameters can differ across methods. We state this source of variation rather than treating the comparison as fixed-hyperparameter isolation.

\textbf{Sparkov.} We use the dataset's standard public training and test split~\cite{sparkovdataset}. Fourteen engineered numeric features cover amount, location, time, demographics, and merchant category. The benchmark searches LightGBM and XGBoost with FLAML. The five seeds change both coreset selection and model search, which explains part of the observed dispersion.

\textbf{CriteoPrivateAds.} We process the full ${\sim}100$M-row corpus and follow the temporal protocol intended by the dataset authors~\cite{criteoprivatead}: days 1--24 form the training split and days 25--30 form the test split. The conversion label is $\texttt{nb\_sales}>0$, with a $0.77\%$ positive rate. Identifier fields, all-null fields, and high-cardinality hash identifiers are excluded, leaving 51 features. Every method uses the same zero-shot FLAML LightGBM procedure and five random seeds. The full-data baseline is one reference run. The CC-BY-SA 4.0 dataset is used under the applicable license, and no dataset rows are included in the paper.

\textbf{Methods.} Random retains all positives and samples negatives uniformly. CCS uses ten quantile strata, uniform allocation, and a $1\%$ hard-example cutoff. CoreTab uses the shared datamap implementation. The production CRISP runs use score-proportional sampling within each stratum ($\gamma=1$), while Sparkov and CriteoPrivateAds use uniform within-stratum sampling ($\gamma=0$). All CRISP runs use $Q=10$, $\alpha=\beta=1$, $w_{\max}=20$, and a proxy model trained on up to one million negatives plus all positives. CriteoPrivateAds runs the same CCS, CoreTab, and importance-sampling implementations used by the other benchmarks.

\textbf{Metric and repetition.} Average Precision is the primary metric because each task is highly imbalanced. The production dataset uses three seeds. Sparkov and Criteo use five. We report means and sample standard deviations across runs. We do not interpret seed dispersion as a formal significance test.

\begin{table*}[t]
\centering
\caption{Experiment configurations used by the three evaluations.}
\label{tab:experiment_config}
\small
\setlength{\tabcolsep}{5pt}
\begin{tabular}{@{}lllll@{}}
\toprule
\textbf{Dataset} & \textbf{Target training} & \textbf{Proxy model} & \textbf{Within stratum} & \textbf{Coreset runs} \\
\midrule
Production & FLAML zero-shot LightGBM, XGBoost transfer & LightGBM, depth 6, 300 trees & Score proportional ($\gamma=1$) & 3 \\
Sparkov & FLAML search over LightGBM and XGBoost & LightGBM, depth 6, 300 trees & Uniform ($\gamma=0$) & 5 \\
CriteoPrivateAds & FLAML zero-shot LightGBM & LightGBM, depth 6, 300 trees & Uniform ($\gamma=0$) & 5 \\
\bottomrule
\end{tabular}
\end{table*}

\subsection{Production Dataset Results}
Table~\ref{tab:main} focuses on rates where the methods separate. Figure~\ref{fig:production_results} shows the corresponding results across all tested reduction rates. At $95\%$ negative reduction, CRISP retains $99.7\%$ of full-data AP$_{\mathrm{month}}$. Random retains $94.2\%$, and CCS retains $94.1\%$. The expected CRISP training set contains about $1.70$M rows, or $6.8\%$ of the original data. At $80\%$ negative reduction, Random and CCS use about $5.38$M rows. CRISP therefore trains on roughly one-third as many rows at the comparison point while retaining more AP. Table~\ref{tab:retention} summarizes retention at the two highest production reduction rates.

\begin{table}[t]
\centering
\caption{Production Dataset AP retained relative to the corresponding full-data model. Values are means over three runs.}
\label{tab:main}
\scriptsize
\setlength{\tabcolsep}{3pt}
\begin{tabular}{@{}llcccccc@{}}
\toprule
 & & \multicolumn{6}{c}{\textbf{Negative-class reduction (\%)}} \\
\cmidrule(lr){3-8}
\textbf{Method} & & 50 & 60 & 70 & 80 & 90 & 95 \\
\midrule
\multirow{2}{*}{Random}
 & AP$_{\mathrm{month}}$ & 100.2\% & 99.7\% & 99.1\% & 98.3\% & 96.1\% & 94.2\% \\
 & AP$_{\mathrm{week}}$ & 99.4\% & 98.6\% & 98.4\% & 97.6\% & 95.5\% & 93.6\% \\
\multirow{2}{*}{CCS}
 & AP$_{\mathrm{month}}$ & 99.5\% & 99.5\% & 98.7\% & 97.7\% & 95.8\% & 94.1\% \\
 & AP$_{\mathrm{week}}$ & 98.7\% & 98.8\% & 98.2\% & 97.0\% & 95.8\% & 93.6\% \\
\multirow{2}{*}{\textbf{CRISP}}
 & AP$_{\mathrm{month}}$ & \textbf{100.3\%} & \textbf{100.5\%} & \textbf{100.4\%} & \textbf{100.3\%} & \textbf{100.4\%} & \textbf{99.7\%} \\
 & AP$_{\mathrm{week}}$ & \textbf{100.0\%} & \textbf{100.1\%} & \textbf{99.6\%} & \textbf{99.6\%} & \textbf{99.4\%} & \textbf{99.2\%} \\
\bottomrule
\end{tabular}
\end{table}

\begin{figure*}[t]
\centering
\includegraphics[width=0.90\textwidth]{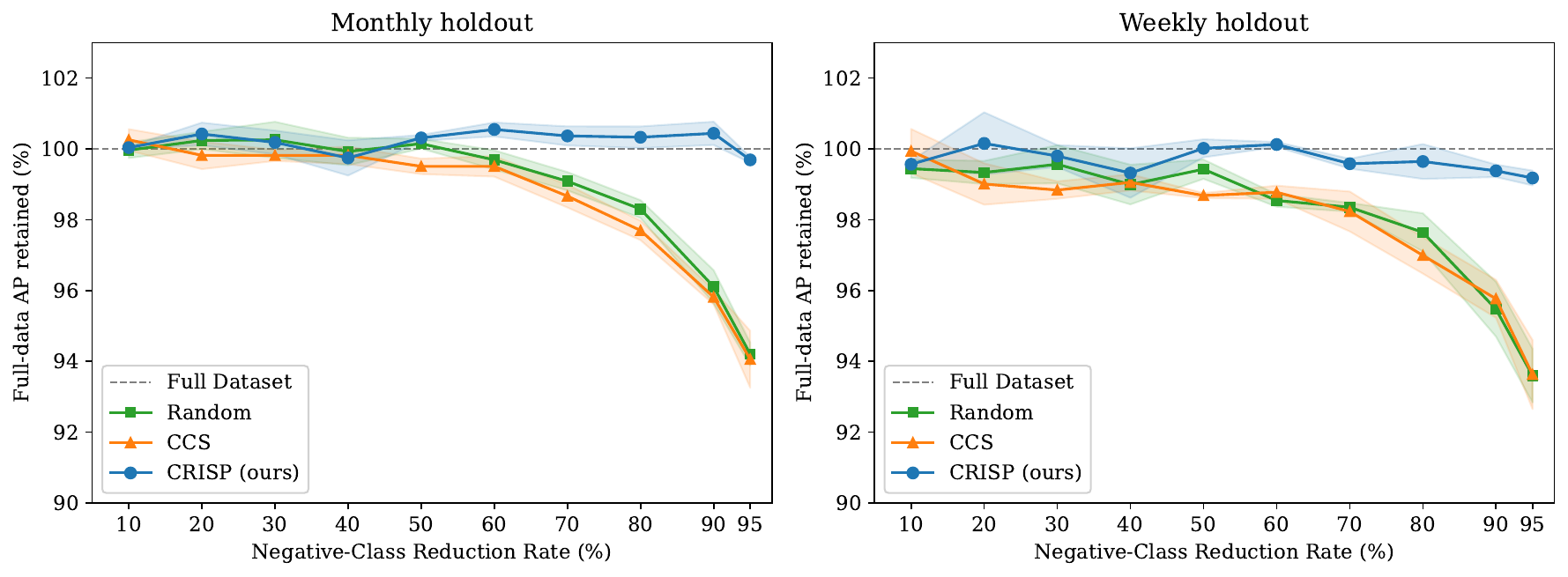}
\Description{Two line charts compare monthly and weekly full-data Average Precision retained on the Production Dataset across negative-class reduction rates for Random, CCS, and CRISP.}
\caption{Production Dataset AP retained relative to full-data AP. Lines show means over three runs, and shaded bands show one sample standard deviation.}
\label{fig:production_results}
\end{figure*}

\begin{table}[t]
\centering
\caption{Percentage of full-data Production Dataset AP retained at high negative-class reduction.}
\label{tab:retention}
\small
\begin{tabular}{@{}lcccc@{}}
\toprule
& \multicolumn{2}{c}{\textbf{90\%}} & \multicolumn{2}{c}{\textbf{95\%}} \\
\cmidrule(lr){2-3}\cmidrule(lr){4-5}
\textbf{Method} & AP$_{\mathrm{month}}$ & AP$_{\mathrm{week}}$ & AP$_{\mathrm{month}}$ & AP$_{\mathrm{week}}$ \\
\midrule
Random & 96.1\% & 95.5\% & 94.2\% & 93.6\% \\
CCS & 95.8\% & 95.8\% & 94.1\% & 93.6\% \\
\textbf{CRISP} & \textbf{100.4\%} & \textbf{99.4\%} & \textbf{99.7\%} & \textbf{99.2\%} \\
\bottomrule
\end{tabular}
\end{table}

\subsection{Ablation}
\label{sec:ablation}
Figure~\ref{fig:ablation} summarizes the ablation suite, which used a separate set of jobs from Table~\ref{tab:main}, so its reference values should not be compared as if they were the same runs. At $95\%$ negative reduction, the $Q=10$ setting retains $99.0\%$ of full-data monthly AP, while $Q=1$ retains $95.2\%$. Constant scores retain $94.6\%$. Removing IPW retains $97.4\%$, compared with $99.2\%$ for the matched clipped-IPW setting. Gradient-only scoring matches the combined score in this suite. Hessian-only scoring is worse because $p(1-p)$ falls when a negative receives a fraud score near one. Proxy-model depth and tree count change retained AP by at most $1.1$ percentage points in the collected runs.

\begin{figure*}[t]
\centering
\includegraphics[width=0.92\textwidth]{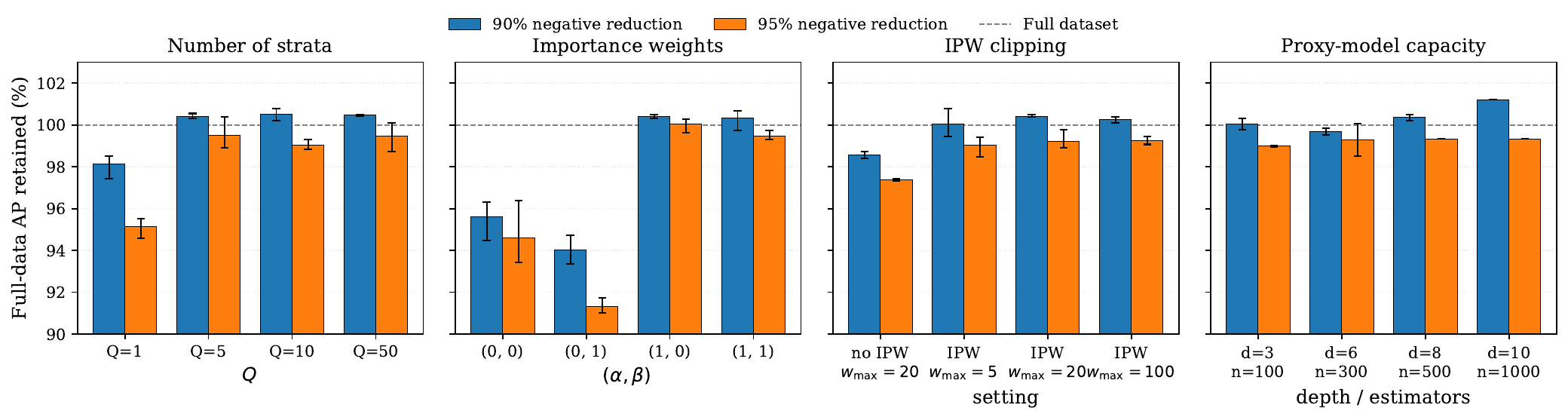}
\Description{Four grouped bar charts show full-data monthly Average Precision retained on the Production Dataset when varying the number of strata, score weights, inverse-propensity clipping, and proxy-model capacity.}
\caption{Production Dataset ablation suite at $90\%$ and $95\%$ negative-class reduction. Error bars span the observed minimum and maximum because some settings have only one or two completed seeds.}
\label{fig:ablation}
\end{figure*}

\subsection{Sparkov}
\label{sec:sparkov}
Sparkov is less conclusive than the production dataset. CRISP has the highest mean at $99.2\%$ and $99.4\%$ negative reduction, but it loses to Random and CCS at $99\%$. Its $99\%$ standard deviation is also the largest in the table because one run falls to $0.6647$. Random has the highest mean at $95\%$ because one run reaches $0.9629$. Appendix~\ref{app:robust_stats} reports medians and interquartile ranges so that these outliers remain visible without determining the entire interpretation.

\begin{table}[t]
\centering
\caption{Sparkov AP mean and sample standard deviation over five runs. Full-data mean AP is $0.8629$. Bold marks the best mean.}
\label{tab:sparkov}
\scriptsize
\setlength{\tabcolsep}{3pt}
\begin{tabular}{@{}lccccc@{}}
\toprule
& \multicolumn{5}{c}{\textbf{Negative-class reduction (\%)}} \\
\cmidrule(lr){2-6}
\textbf{Method} & 90 & 95 & 99 & 99.2 & 99.4 \\
\midrule
Random & .861$_{\pm.014}$ & \textbf{.883}$_{\pm.046}$ & \textbf{.833}$_{\pm.013}$ & .834$_{\pm.013}$ & .823$_{\pm.018}$ \\
CCS & .860$_{\pm.009}$ & .852$_{\pm.014}$ & .835$_{\pm.013}$ & .816$_{\pm.031}$ & .803$_{\pm.023}$ \\
CoreTab & \textbf{.863}$_{\pm.006}$ & .859$_{\pm.015}$ & .806$_{\pm.068}$ & .846$_{\pm.007}$ & .812$_{\pm.078}$ \\
\textbf{CRISP} & .862$_{\pm.013}$ & .874$_{\pm.004}$ & .817$_{\pm.089}$ & \textbf{.866}$_{\pm.009}$ & \textbf{.846}$_{\pm.021}$ \\
\bottomrule
\end{tabular}
\end{table}

\subsection{CriteoPrivateAds}
\label{sec:criteo}
The Criteo training split contains about $85$M rows with a $0.77\%$ positive rate. CRISP has the highest mean in each column of Table~\ref{tab:criteo}. At $90\%$ negative reduction, it reaches $0.1734$ AP on about $9.09$M target-training rows, or $10.7\%$ of the full split, and retains $99.8\%$ of the full-data AP. At $99.4\%$, about $1.16$M rows remain, or $1.37\%$ of the full split. CRISP retains $94.2\%$ of full-data AP, compared with approximately $90\%$ for Random and CCS and $76.7\%$ for CoreTab. The lead over the strongest non-CRISP baseline grows from $0.0034$ AP at $90\%$ reduction to $0.0074$ at $99.4\%$. At every rate, this gap is at least eight times the larger reported run-to-run standard deviation, although this descriptive ratio is not a formal hypothesis test.

\begin{table}[t]
\centering
\caption{CriteoPrivateAds AP mean and sample standard deviation over five runs. The single full-data reference has AP $0.1737$.}
\label{tab:criteo}
\scriptsize
\setlength{\tabcolsep}{2.5pt}
\begin{tabular}{@{}lccccc@{}}
\toprule
& \multicolumn{5}{c}{\textbf{Negative-class reduction (\%)}} \\
\cmidrule(lr){2-6}
\textbf{Method} & 90 & 95 & 99 & 99.2 & 99.4 \\
\midrule
Random & .1700$_{\pm.0002}$ & .1677$_{\pm.0003}$ & .1595$_{\pm.0004}$ & .1581$_{\pm.0003}$ & .1562$_{\pm.0004}$ \\
CCS & .1698$_{\pm.0001}$ & .1677$_{\pm.0002}$ & .1597$_{\pm.0003}$ & .1584$_{\pm.0004}$ & .1563$_{\pm.0009}$ \\
CoreTab & .1453$_{\pm.0008}$ & .1437$_{\pm.0009}$ & .1372$_{\pm.0010}$ & .1343$_{\pm.0021}$ & .1333$_{\pm.0014}$ \\
\textbf{CRISP} & \textbf{.1734}$_{\pm.0004}$ & \textbf{.1722}$_{\pm.0001}$ & \textbf{.1666}$_{\pm.0004}$ & \textbf{.1657}$_{\pm.0003}$ & \textbf{.1637}$_{\pm.0005}$ \\
\bottomrule
\end{tabular}
\end{table}

\begin{figure*}[t]
\centering
\includegraphics[width=0.94\textwidth]{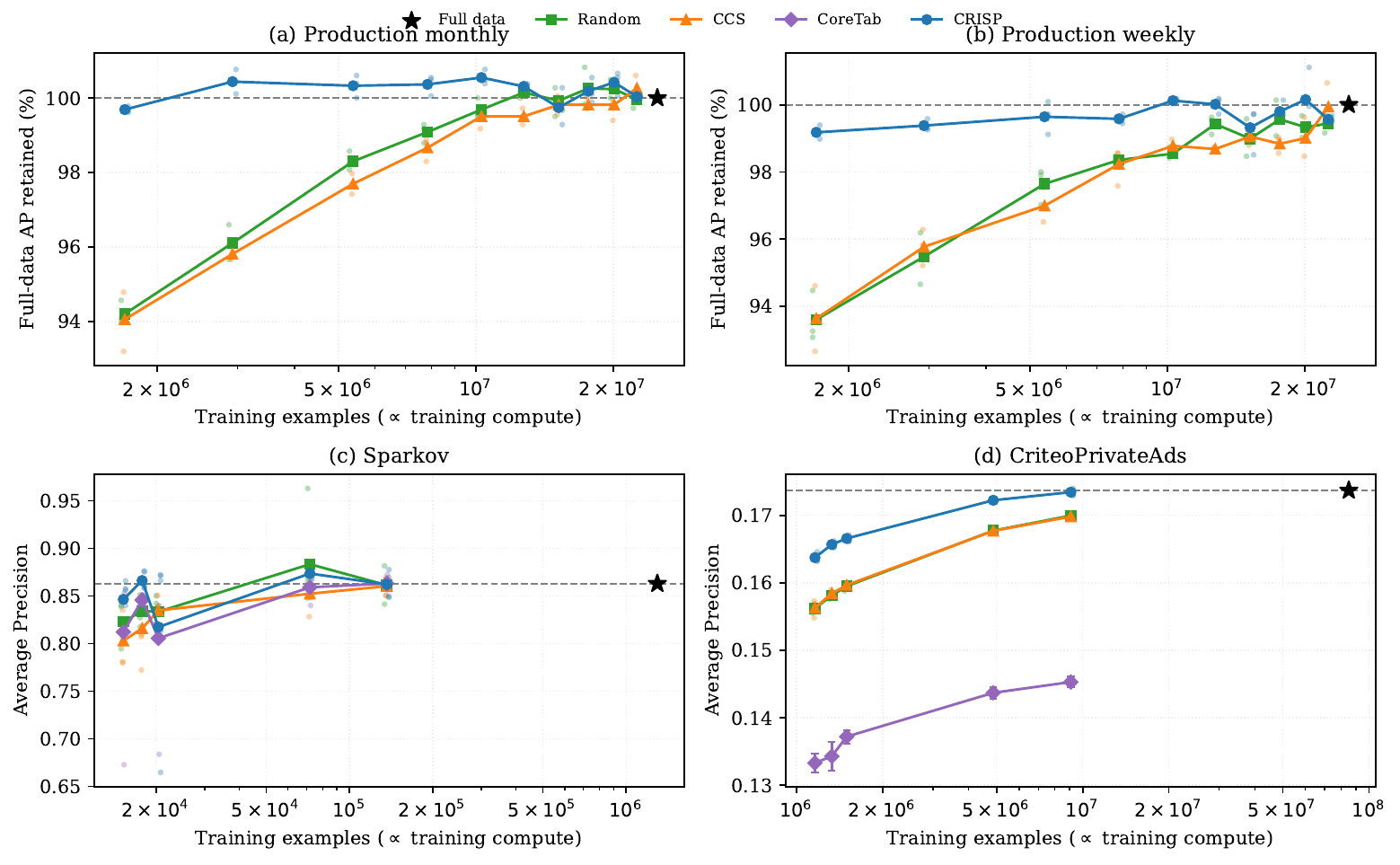}
\Description{Four panels plot relative or absolute Average Precision against approximate target-model training rows on a log scale for the Production Dataset monthly and weekly holds, Sparkov, and CriteoPrivateAds.}
\caption{Target-training row tradeoffs. Row counts are derived from reported class counts and the keep-all-positives rule. They approximate target-model training cost and exclude selection overhead.}
\label{fig:compute_tradeoffs}
\end{figure*}

\subsection{Target-Model Transfer}
Figure~\ref{fig:production_xgb} reports the existing Production Dataset XGBoost runs, which provide a second target model without new training. At $95\%$ negative reduction, CRISP retains $98.9\%$ of monthly full-data AP and $98.2\%$ of weekly full-data AP. Random retains $96.5\%$ and $94.7\%$, while CCS retains $96.2\%$ and $95.6\%$. This result supports transfer across two tree implementations, although it remains one dataset.

\subsection{Compute Interpretation}
Figure~\ref{fig:compute_tradeoffs} replots AP against derived target-training rows. The x-axis is not wall-clock time. CRISP adds a proxy-model fit and a distributed scoring pass before target training. Random does not. CCS also fits a proxy model, while CoreTab performs several prediction passes. The figure shows how target-model AP changes with training-set size. It does not establish end-to-end speedup.

\subsection{Cross-Dataset Analysis}
The clearest pattern is that allocation matters more as the negative budget shrinks. On the production dataset, Random and CCS remain close to the full-data result through moderate reduction, then decline after $60\%$. CRISP retains at least $99.7\%$ of monthly full-data AP through $95\%$ negative reduction. The value above $100\%$ retention at $90\%$ should not be read as evidence that pruning improves the population optimum. It is small relative to run-to-run variation and may reflect regularization, a changed class ratio, or ordinary training noise.

Criteo shows the same ordering with less seed variation. The CRISP margin over Random grows from $0.0034$ AP at $90\%$ negative reduction to $0.0075$ at $99.4\%$. This pattern is consistent with the allocation argument: uniform negative sampling becomes less likely to retain rare high-score regions as the budget contracts. The result is useful because Criteo differs from the production dataset in domain, feature construction, and target label, while using the same coreset implementations and configuration family.

Sparkov is a useful counterexample to a simple success narrative. Mean AP favors Random or CCS at $95\%$ and $99\%$, while CRISP leads at $99.2\%$ and $99.4\%$. The medians in Appendix~\ref{app:robust_stats} favor CRISP more consistently, which shows how strongly a single AutoML run can affect a five-seed mean. At $99.4\%$ negative reduction, the expected target-training set has only about $15{,}241$ rows. Model search and coreset randomness are both visible at that scale.

The ablation narrows the source of the production gain. Removing strata reduces coverage, while removing inverse weighting changes the effective training distribution. Proxy-model depth and tree count have much smaller effects in the tested range. These observations support the design choices, but they do not prove that the same allocation is optimal on every dataset. A fixed-hyperparameter study and end-to-end timing measurements remain necessary for a cleaner causal and systems comparison.

\section{Reproducibility}
\label{sec:reproducibility}

Sparkov and CriteoPrivateAds provide public benchmark data for evaluating the method outside the proprietary setting. CriteoPrivateAds is the primary public-data evaluation at scale. The paper documents the data splits, method and configuration family, reduction rates, random seeds, evaluation metrics, aggregate results, and public-benchmark per-run values in the appendix. No external artifact release is pledged as part of this submission.

The proprietary dataset cannot be released due to confidentiality restrictions. The production implementation, configurations, feature definitions, logs, run metadata, and underlying records are likewise not released. Absolute production AP values are withheld, and all reported production results are normalized by the corresponding full-data model. The public datasets are cited and used under their applicable licenses; their rows are not redistributed. Hardware details were not preserved consistently across historical runs, which is another reason the paper reports target-training rows rather than wall-clock time.

\section{Limitations and Ethics}
\label{sec:limitations}

\textbf{Method limitations.} Retaining every positive works only when the positive class is small enough to leave a useful negative budget. The production dataset uses score-proportional within-stratum sampling, while Sparkov and CriteoPrivateAds use uniform within-stratum sampling. Their shared evidence therefore concerns score strata, budget allocation, and weighting rather than one fixed within-stratum rule. Bernoulli sampling introduces small differences between target and realized coreset sizes, and weight clipping introduces the bias in Equation~\ref{eq:clipping_bias}.

\textbf{Evaluation limitations.} The row-count tradeoff is not an end-to-end timing result. It omits proxy-model fitting, distributed scoring, data movement, and calibration. Production target hyperparameters can vary because FLAML makes a new zero-shot suggestion for each selected dataset. This mirrors the existing training pipeline but weakens isolation of the selection method. The evaluation reports AP over three or five seeds and does not include a paired test, subgroup metrics, or a fixed-hyperparameter replication. The strongest claims are therefore about observed AP retention, not universal dominance or deployment cost.

\textbf{Ethics.} The production dataset is handled under institutional data-governance controls. The paper reports aggregate metrics and does not release transactions or user identifiers. The payment reversal fraud label reflects an operational outcome and should not be interpreted as evidence of a customer's intent. Direct identifiers are removed from the public datasets during preprocessing. Coreset selection can still alter subgroup representation and false-positive rates even when aggregate AP is stable. Before deployment, practitioners should compare subgroup recall, calibration, and manual-review burden against the full-data model. They should also check whether high-score strata overrepresent particular geographies, payment methods, or customer cohorts.

\section{Conclusion}
\label{sec:conclusion}

CRISP addresses a specific large-scale setting: binary tabular learning with a small positive class and a highly redundant negative class. A proxy GBDT places negatives into score quantiles, the mean score determines each stratum's budget, and inverse inclusion probabilities reduce the distribution shift introduced by unequal sampling. The procedure needs only a fixed-size proxy-model fit and distributed passes over the training table. It does not require pairwise gradients or predictions from many training checkpoints.

The strongest result comes from the production dataset. At $95\%$ negative-class reduction, CRISP retains $99.7\%$ of full-data monthly AP while using about $1.70$M of $25$M rows. CriteoPrivateAds shows the same mean ordering across five reduction rates and provides evidence outside fraud detection. Sparkov is less stable. Its outlier runs and changing method ranking show why a high mean at one rate is not enough to claim general superiority. The XGBoost experiment adds evidence that the selected production rows transfer across two tree implementations.

These experiments support importance-aware allocation when the negative budget is scarce. They do not establish a universal variance guarantee or an end-to-end wall-clock speedup. The next evaluation should use one within-stratum rule across all datasets, fixed target hyperparameters, actual selection and training times, and enough repeated runs for paired uncertainty estimates. Subgroup checks are also needed before the empirical package is complete.

\bibliographystyle{ACM-Reference-Format}
\bibliography{references}

\appendix
\section{Estimator Derivation}
\label{app:proofs}

\begin{proof}[Proof of Proposition~\ref{prop:unbiased}]
Condition on the realized design $\mathcal D$. The positive rows are retained deterministically. For each negative row, $\pi_i>0$ and
\begin{equation}
    \mathbb{E}\left[
    I_i\frac{\ell_i(\theta)}{\pi_i}\,\middle|\,\mathcal D
    \right]
    =\frac{\ell_i(\theta)}{\pi_i}
    \mathbb{E}[I_i\mid\mathcal D]
    =\ell_i(\theta).
\end{equation}
Summing over the negative rows and adding the deterministic positive sum gives
$\mathbb{E}[\widehat L(\theta)\mid\mathcal D]=L_{\mathcal V}(\theta)$.
This step uses linearity of expectation and does not require the indicators
to be independent. Unconditional unbiasedness follows from the law of total
expectation.
\end{proof}

\textbf{Clipping.} Conditional on $\mathcal D$,
$\mathbb{E}[I_i\widetilde w_i\ell_i\mid\mathcal D]
=\min(1,\pi_iw_{\max})\ell_i$. If $\pi_i\ge 1/w_{\max}$, the expected contribution remains $\ell_i$. Otherwise it is $\pi_i w_{\max}\ell_i$. Subtracting these terms from the full loss gives Equation~\ref{eq:clipping_bias}.

\textbf{Evaluated variants.} The production dataset samples within each stratum with probability proportional to score. Sparkov and CriteoPrivateAds use a constant Bernoulli fraction inside each stratum after the same importance-based allocation.

\section{Full Per-Run Experimental Results}
\label{app:fullresults}

Table~\ref{tab:production_runs} reports relative per-run Production Dataset results at the two highest reduction rates. Per-run Sparkov and CriteoPrivateAds tables follow.

\begin{table*}[t]
\centering
\caption{Per-run Production Dataset AP retained as a percentage of the corresponding full-data AP.}
\label{tab:production_runs}
\small
\setlength{\tabcolsep}{7pt}
\begin{tabular}{@{}clcccc@{}}
\toprule
& & \multicolumn{2}{c}{\textbf{Monthly AP retained}} & \multicolumn{2}{c}{\textbf{Weekly AP retained}} \\
\cmidrule(lr){3-4}\cmidrule(lr){5-6}
\textbf{Run} & \textbf{Method} & 90\% & 95\% & 90\% & 95\% \\
\midrule
\multirow{3}{*}{1}
 & Random & 96.6\% & 94.6\% & 96.2\% & 94.5\% \\
 & CCS & 95.7\% & 94.2\% & 95.8\% & 93.6\% \\
 & \textbf{CRISP} & \textbf{100.4\%} & \textbf{99.6\%} & \textbf{99.6\%} & \textbf{99.4\%} \\
\midrule
\multirow{3}{*}{2}
 & Random & 96.1\% & 94.1\% & 95.6\% & 93.3\% \\
 & CCS & 96.1\% & 93.2\% & 96.3\% & 92.7\% \\
 & \textbf{CRISP} & \textbf{100.8\%} & \textbf{99.7\%} & \textbf{99.3\%} & \textbf{99.2\%} \\
\midrule
\multirow{3}{*}{3}
 & Random & 95.7\% & 94.0\% & 94.7\% & 93.1\% \\
 & CCS & 95.7\% & 94.8\% & 95.2\% & 94.6\% \\
 & \textbf{CRISP} & \textbf{100.1\%} & \textbf{99.8\%} & \textbf{99.3\%} & \textbf{99.0\%} \\
\bottomrule
\end{tabular}
\end{table*}

\section{Full Per-Run Sparkov Results}
\label{app:sparkov_fullresults}

Table~\ref{tab:sparkov_runs} reports per-run AP on the Sparkov dataset. Full-data baselines are $0.8510$, $0.8594$, $0.8736$, $0.8563$, and $0.8742$ for seeds 1 through 5.

\begin{table*}[t]
\centering
\caption{Sparkov per-run AP across negative-class reduction rates.}
\label{tab:sparkov_runs}
\scriptsize
\setlength{\tabcolsep}{4pt}
\begin{tabular}{@{}clccccc@{}}
\toprule
& & \multicolumn{5}{c}{\textbf{Negative-Class Reduction (\%)}} \\
\cmidrule(lr){3-7}
\textbf{Run} & \textbf{Method} & 90 & 95 & 99 & 99.2 & 99.4 \\
\midrule
\multirow{4}{*}{1}
 & Random & .8815 & .8720 & .8195 & .8358 & .8213 \\
 & CCS & .8677 & .8567 & .8376 & .7722 & .7798 \\
 & CoreTab & .8614 & .8485 & .8368 & .8499 & .8458 \\
 & \textbf{CRISP} & .8691 & .8786 & .8659 & .8763 & .8561 \\
\midrule
\multirow{4}{*}{2}
 & Random & .8414 & .9629 & .8422 & .8272 & .8386 \\
 & CCS & .8501 & .8612 & .8506 & .8073 & .8063 \\
 & CoreTab & .8565 & .8616 & .8346 & .8500 & .6728 \\
 & \textbf{CRISP} & .8659 & .8765 & .6647 & .8752 & .8577 \\
\midrule
\multirow{4}{*}{3}
 & Random & .8614 & .8648 & .8501 & .8387 & .8402 \\
 & CCS & .8693 & .8620 & .8299 & .8094 & .7809 \\
 & CoreTab & .8618 & .8668 & .8325 & .8511 & .8443 \\
 & \textbf{CRISP} & .8496 & .8726 & .8722 & .8582 & .8399 \\
\midrule
\multirow{4}{*}{4}
 & Random & .8620 & .8692 & .8217 & .8177 & .8205 \\
 & CCS & .8508 & .8533 & .8395 & .8503 & .8351 \\
 & CoreTab & .8722 & .8784 & .6837 & .8439 & .8445 \\
 & \textbf{CRISP} & .8484 & .8681 & .8128 & .8617 & .8121 \\
\midrule
\multirow{4}{*}{5}
 & Random & .8601 & .8476 & .8333 & .8511 & .7945 \\
 & CCS & .8634 & .8283 & .8164 & .8403 & .8128 \\
 & CoreTab & .8633 & .8400 & .8400 & .8337 & .8531 \\
 & \textbf{CRISP} & .8776 & .8718 & .8711 & .8591 & .8659 \\
\bottomrule
\end{tabular}
\end{table*}

\section{Sparkov Robust Summaries}
\label{app:robust_stats}

Table~\ref{tab:sparkov_robust} reports medians and interquartile ranges across the same five runs as Table~\ref{tab:sparkov}. The mean and median rankings differ because Random, CoreTab, and CRISP each contain at least one large outlier.

\begin{table*}[t]
\centering
\caption{Sparkov AP median with interquartile range in brackets.}
\label{tab:sparkov_robust}
\scriptsize
\setlength{\tabcolsep}{3.5pt}
\begin{tabular}{@{}lccccc@{}}
\toprule
& \multicolumn{5}{c}{\textbf{Negative-Class Reduction (\%)}} \\
\cmidrule(lr){2-6}
\textbf{Method} & 90 & 95 & 99 & 99.2 & 99.4 \\
\midrule
Random & .861 [.860,.862] & .869 [.865,.872] & .833 [.822,.842] & .836 [.827,.839] & .821 [.821,.839] \\
CCS & .863 [.851,.868] & .857 [.853,.861] & .838 [.830,.840] & .809 [.807,.840] & .806 [.781,.813] \\
CoreTab & .862 [.861,.863] & .862 [.849,.867] & .835 [.833,.837] & .850 [.844,.850] & .845 [.844,.846] \\
CRISP & .866 [.850,.869] & .873 [.872,.877] & .866 [.813,.871] & .862 [.859,.875] & .856 [.840,.858] \\
\bottomrule
\end{tabular}
\end{table*}

\section{Production Dataset XGBoost Transfer Results}

\begin{figure*}[t]
\centering
\includegraphics[width=0.85\textwidth]{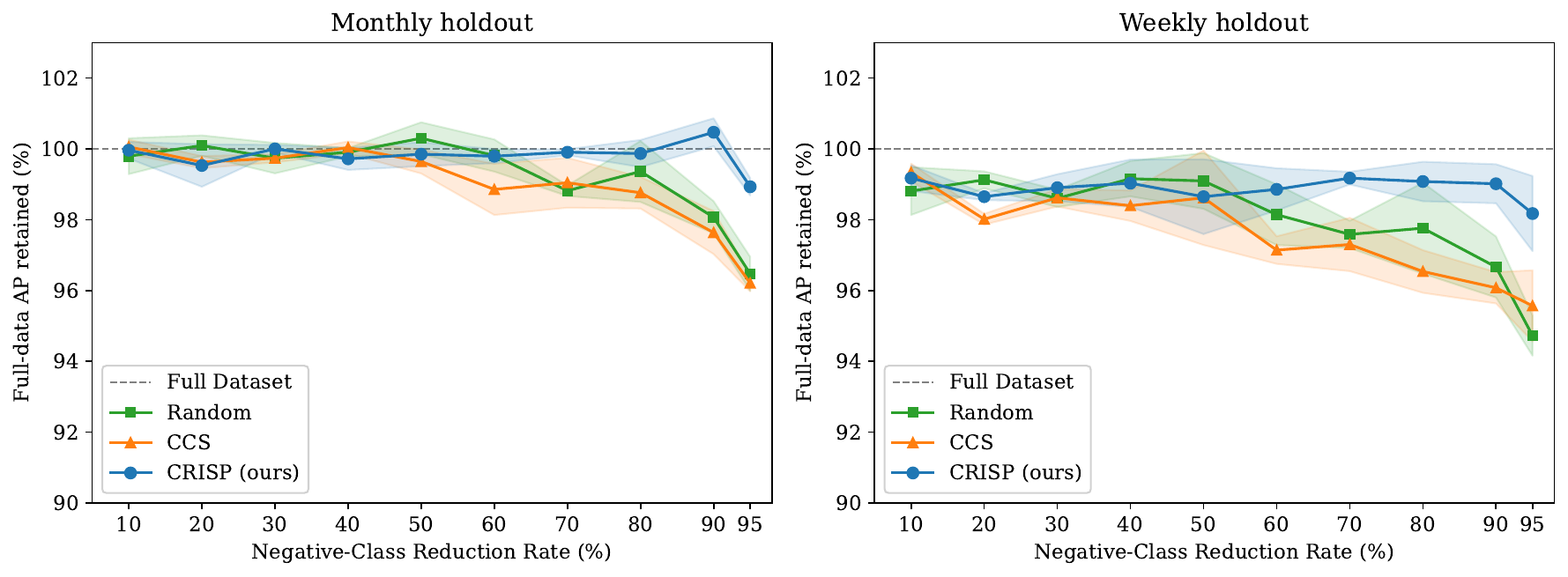}
\Description{Two line charts compare monthly and weekly full-data Average Precision retained on the Production Dataset for XGBoost targets across negative-class reduction rates.}
\caption{Production Dataset XGBoost target results over three runs, reported relative to full-data AP.}
\label{fig:production_xgb}
\end{figure*}

\section{Full Per-Run CriteoPrivateAds Results}
\label{app:criteo_fullresults}

Table~\ref{tab:criteo_runs} reports per-run AP on CriteoPrivateAds for seeds 42--46. The full-data baseline (AP $= 0.1737$) is a single reference run trained on all ${\sim}85$M training-split examples. CoreTab is reported only in aggregate in Table~\ref{tab:criteo} because its per-seed values were not retained in the exported summary.

\begin{table*}[t]
\centering
\caption{CriteoPrivateAds per-run AP across negative-class reduction rates.}
\label{tab:criteo_runs}
\small
\setlength{\tabcolsep}{6pt}
\begin{tabular}{@{}clccccc@{}}
\toprule
& & \multicolumn{5}{c}{\textbf{Negative-Class Reduction (\%)}} \\
\cmidrule(lr){3-7}
\textbf{Seed} & \textbf{Method} & 90 & 95 & 99 & 99.2 & 99.4 \\
\midrule
\multirow{3}{*}{42}
 & Random & .1701 & .1680 & .1600 & .1584 & .1564 \\
 & CCS & .1699 & .1680 & .1599 & .1586 & .1563 \\
 & \textbf{CRISP} & \textbf{.1735} & \textbf{.1723} & \textbf{.1662} & \textbf{.1652} & \textbf{.1632} \\
\midrule
\multirow{3}{*}{43}
 & Random & .1700 & .1676 & .1588 & .1576 & .1555 \\
 & CCS & .1699 & .1676 & .1597 & .1582 & .1561 \\
 & \textbf{CRISP} & \textbf{.1731} & \textbf{.1720} & \textbf{.1667} & \textbf{.1657} & \textbf{.1641} \\
\midrule
\multirow{3}{*}{44}
 & Random & .1702 & .1682 & .1595 & .1581 & .1563 \\
 & CCS & .1699 & .1677 & .1597 & .1578 & .1573 \\
 & \textbf{CRISP} & \textbf{.1740} & \textbf{.1723} & \textbf{.1661} & \textbf{.1662} & \textbf{.1633} \\
\midrule
\multirow{3}{*}{45}
 & Random & .1697 & .1674 & .1597 & .1585 & .1564 \\
 & CCS & .1698 & .1675 & .1592 & .1590 & .1548 \\
 & \textbf{CRISP} & \textbf{.1737} & \textbf{.1722} & \textbf{.1668} & \textbf{.1659} & \textbf{.1646} \\
\midrule
\multirow{3}{*}{46}
 & Random & .1698 & .1675 & .1595 & .1580 & .1565 \\
 & CCS & .1696 & .1677 & .1598 & .1585 & .1569 \\
 & \textbf{CRISP} & \textbf{.1729} & \textbf{.1724} & \textbf{.1671} & \textbf{.1654} & \textbf{.1635} \\
\bottomrule
\end{tabular}
\end{table*}

\end{document}